\documentclass{article}

\usepackage[nonatbib,final]{neurips_2022}

\usepackage[utf8]{inputenc} % allow utf-8 input
\usepackage[T1]{fontenc}    % use 8-bit T1 fonts
\usepackage{hyperref}       % hyperlinks
\usepackage{url}            % simple URL typesetting
\usepackage{booktabs}       % professional-quality tables
\usepackage{amsfonts}       % blackboard math symbols
\usepackage{nicefrac}       % compact symbols for 1/2, etc.
\usepackage{microtype}      % microtypography
\usepackage{xcolor}         % colors
\usepackage{bm}
\usepackage{amsmath}
\usepackage{amsthm}
\usepackage{enumitem}
\usepackage{amssymb}
\usepackage{amsfonts}
\usepackage[ruled,vlined]{algorithm2e}
\usepackage{physics}
\usepackage{bbm}

\newcommand{\E}{\mathop{\mathbb{E}}}
\def\ddefloop#1{\ifx\ddefloop#1\else\ddef{#1}\expandafter\ddefloop\fi}
\def\ddef#1{\expandafter\def\csname bb#1\endcsname{\ensuremath{\mathbb{#1}}}}
\ddefloop ABCDEFGHIJKLMNOPQRSTUVWXYZ\ddefloop
\def\ddef#1{\expandafter\def\csname cal#1\endcsname{\ensuremath{\mathcal{#1}}}}
\ddefloop ABCDEFGHIJKLMNOPQRSTUVWXYZ\ddefloop
\def\ddef#1{\expandafter\def\csname frak#1\endcsname{\ensuremath{\mathfrak{#1}}}}
\ddefloop ABCDEFGHIJKLMNOPQRSTUVWXYZ\ddefloop
\def\ddef#1{\expandafter\def\csname hat#1\endcsname{\ensuremath{\hat{#1}}}}
\ddefloop ABCDEFGHIJKLMNOPQRSTUVWXYZ\ddefloop

\theoremstyle{plain}
\newtheorem{theorem}{Theorem}

\theoremstyle{definition}

\newtheorem{assumption}{Assumption}

\theoremstyle{remark}
\newtheorem*{remark}{Remark}

\usepackage[pdftex]{graphicx}
\usepackage{subcaption}
\usepackage{multirow}
\usepackage[numbers]{natbib}
\usepackage{array}
\usepackage{wrapfig}
\usepackage{thmtools, thm-restate}
\usepackage{cancel}

\definecolor{yx}{RGB}{185,85 ,32}

\definecolor{yc}{RGB}{32,185, 185}

\title{Teacher Forcing Recovers Reward\\ Functions for Text Generation}

\SetKwComment{Comment}{$\triangleright$\ }{}

\author{%
  Yongchang Hao$^\dagger$, Yuxin Liu$^\dagger$, Lili Mou$^\dagger$$^\ddagger$\\
  $^\dagger$Dept. Computing Science, Alberta Machine Intelligence Institute (Amii) \\
  University of Alberta, Canada\\
  $^\ddagger$Canada CIFAR AI Chair, Amii\\
  \texttt{\{yongcha1,yliu17\}@ualberta.ca,
  doublepower.mou@gmail.com}
}

\begin{document}

\maketitle

\begin{abstract}

Reinforcement learning (RL) has been widely used in text generation to alleviate the exposure bias issue or to utilize non-parallel datasets. The reward function plays an important role in making RL training successful. However, previous reward functions are typically task-specific and sparse, restricting the use of RL. In our work, we propose a task-agnostic approach that derives a step-wise reward function directly from a model trained with teacher forcing. We additionally propose a simple modification to stabilize the RL training on non-parallel datasets with our induced reward function. Empirical results show that our method outperforms self-training and reward regression methods on several text generation tasks, confirming the effectiveness of our reward function.\footnote{Our code is publicly available at \url{https://github.com/MANGA-UOFA/LMReward}}
  
\end{abstract}

\section{Introduction}\label{sec:intro}
Teacher forcing~\cite{cho2014learning} is the common training method for text generation models. Although this practice has been widely applied~\cite{cho2014learning, gehring2017convolutional, vaswani2017attention}, 
there are two main issues: 1) Teacher-forcing training is data-hungry because parallel datasets are usually expensive to obtain. 
On the other hand, there are numerous unlabeled, non-parallel datasets available. This poses an urge to efficiently exploit non-parallel data. 2) Teacher forcing introduces a discrepancy between training and inference because the model learns to predict the next word based on the partial groundtruth reference during training, whereas in inference the model predicts the next word based on its self-generated previous words. This undesired discrepancy is known as \emph{exposure bias}~\cite{ranzato2015sequence,chiang2021relating,li2021mixed,voita2021analyzing}.

To address the first problem, a straightforward method is to generate pseudo-parallel sentences for data augmentation, such as self-training~\cite{blum1998combining}, sequence-level knowledge distillation~\cite{kim2016sequence}, and back-translation~\cite{sennrich2015improving}. However, the exposure bias remains in such cases.

To address the second problem, the model should be trained on self-generated sentences. Common solutions are often based on reinforcement learning (RL). In text generation, however, there does not exist a naturally defined reward function for RL. Researchers have proposed various heuristic scores as the reward, such as BLEU~\cite{papineni2002bleu} for translation and ROUGE~\cite{lin2004rouge} for summarization. These reward functions are task-specific and not generalizable to other tasks. Further, these rewards require parallel data, failing to address the first problem above; they are typically sparse (only non-zero at the end of a sentence), making RL training difficult.

The goal of this paper is to address these two problems in one framework with a learned, dense reward function. Our approach has two steps: we first train a sequence-to-sequence (seq2seq) model on the parallel dataset and induce a reward function from the model. Then, we apply RL on non-parallel data based on our induced reward function.

Our method is task-agnostic and does not require handcrafted engineering or heuristics. Further, our reward function provides dense (step-wise) training signals, which makes RL training much easier than sparse rewards. Additionally, the reward function derived from the seq2seq model does not directly participate in the generation, which allows the model to explore based on its own prediction and thus alleviates the exposure bias.

We conduct experiments on dialogue generation and paraphrase generation. The empirical results suggest that our method leads to better performance compared with several baselines, including self-training and task-specific heuristic reward learning, on both tasks. This confirms the effectiveness and generality of our framework.

\section{Approach}\label{sec:approach}

Our approach trains the seq2seq model on non-parallel data with reinforcement learning, whose foundation is the Markov decision process (MDP). In this section, we first introduce the MDP formulation for text generation. Then we describe our method to derive the reward function from a seq2seq model trained by teacher forcing. Finally, we describe the policy gradient method used in RL training with our induced reward function.

\subsection{Reinforcement Learning Formulation of Text Generation}\label{sec:rl}
\paragraph{Text Generation as a Markov Decision Process (MDP).}
We formulate the text generation process as an (undiscounted) MDP, which can be represented as a tuple $(\mathcal S, \mathcal A, T, r)$. At every step, a decision $a \in \calA$ is made based on its state $s \in \calS$. The transition dynamic $T(s'|s,a)$ is the probability of the next state being $s'$, given the current state $s$ and the action $a$. A function $r: \calS\times\calA \to \bbR$ defines the reward based on a state and an action.

Typically, the decision making is assisted by a policy $\pi$, which is a predicted distribution over actions and is trained to maximize the expected total reward, also known as an action value function:
\begin{align}\label{eq:def_q}
q^\pi(s, a) := \mathop\mathbb{E}_{\substack{a_t\sim\pi(\cdot|s_t)\\ s_{t+1}\sim T(\cdot|s_t, a_t)}} \left[ \sum_{t=1}^H r(s_t, a_t) | s_1 = s, a_1 = a \right],
\end{align}
where $H$ is the number of steps. Theoretical results show that the optimal policy $\pi^*$ satisfies the Bellman optimality equation:
\begin{align}\label{eq:optimal_bellman}
    q^{\pi^*}(s, a) = r(s, a) + \sum_{s'\in\calS} T(s' | s, a) \max_{a'} q^{\pi^*}(s', a').
\end{align} 

For text generation, the MDP state can be defined as the partial generated sequence $\bm{y}_{<t} := (y_1, \cdots, y_{t-1} )$, and the action as the next token $y_t$ in the vocabulary $\calV$. The transition dynamic $T(\cdot|s, a)$ here is deterministic, since every state--action pair $(\bm{y}_{<t}, y_t)$ leads to a unique state $\bm{y}_{< t + 1}$ for the next step.

In previous RL-based text generation, there lacks a naturally defined reward function $r(s, a)$. While researchers have applied various heuristics as the reward~\cite{bahdanau2016actor,shen2015minimum}, they suffer from several shortcomings (e.g., sparsity and task specificity) as mentioned in Section~\ref{sec:intro}.
To address these problems, we propose to induce a reward function for text generation tasks in a principled approach by inverse reinforcement learning.

\paragraph{Inverse Reinforcement Learning (IRL).} The goal of IRL is to learn a reward function $r(s,a)$. Especially, we wish the resulting action value function $q$ computed by Eqn.~\eqref{eq:def_q} could satisfy $q(s, a) \ge q(s, a')$ for every $a'\in\calA$ and every  $(s, a)$ pair in the training set $\calD$.  In other words, the decisions in $\calD$ are made greedily by $\operatorname{argmax}_aq(s,a)$ given any state $s$. Unfortunately, \citet{ng2000algorithms} show that this is an ill-posed problem since the desirable reward function $r$ is not unique. Therefore, we follow a common assumption~\cite{chan2021scalable,ramachandran2007bayesian,ziebart2008maximum} to resolve the ambiguity:
\begin{assumption}\label{asp:exp_policy} Given an action value function $q$, the policy $\pi$ takes the form of $\pi_{q}(a|s) := \exp(q(s, a)) / \sum_{a'} \exp( q(s, a'))$.
\end{assumption}

In traditional IRL~\cite{chan2021scalable,ramachandran2007bayesian,ziebart2008maximum}, reward learning is difficult and this assumption does not directly yield a reward function due to the stochastic state transition $T(s'|s, a)$. 
However, our insight is that the transition is deterministic  for text generation tasks, and thus we may utilize Assumption~\ref{asp:exp_policy} to induce an action value function $q$, and then a reward function $r$, from some learned policy $\pi$, as explained in the next part.

\subsection{Teacher Forcing Recovers IRL}\label{sec:irl}
One of our main contributions is that we show the seemingly complicated reward learning in Section~\ref{sec:rl} can be recovered by teacher forcing, the \textit{de facto} common practice of supervised text generation. 
Our discovery leads to a convenient approach that derives a step-wise reward function simply from general seq2seq models, without the need for task-specific heuristics. This makes RL more general for text generation, and our step-wise reward largely simplifies RL training.

\paragraph{Maximum Likelihood Estimation (MLE) for IRL.}
Following Assumption~\ref{asp:exp_policy}, we let the policy $\pi_{q_\omega}(\cdot|s) \propto  \exp(q_\omega(s, \cdot))$, where $q_\omega$ is a parameterized action value function.  Under such a  policy, the probability of each trajectory $\tau := ((s_1, a_1), \dots, (s_{|\tau|}, a_{|\tau|}))$ in the dataset is given by the trajectory distribution $P^{\pi_{q_\omega}}$. The likelihood of  the dataset is given by
\begin{align}\label{eq:irl_mle}
P_\text{IRL}(\calD | \omega) := \prod_{\tau \in \calD} P^{\pi_{q_\omega}}(\tau).
\end{align}
\paragraph{Teacher-Forcing Training.} 
For text generation, the standard teacher-forcing seq2seq training is to minimize the loss:
\begin{align}\label{eq:tf_mle}
L_\text{TF}(\omega;\calD):=-\sum_{\bm y\in\calD}\sum_{t=1}^{|\bm y|}\log p_\omega(y_t|\bm y_{<t}),
\end{align}
where the predicted probability of the next token being $v$ is $p_\omega(v|\bm{y}_{<t})=\frac{\exp(f_{\omega}(\bm{y}_{<t}, v))}{\sum_{v'\in\calV}\exp( f_{\omega}(\bm{y}_{<t}, v'))}$ for the logit function $f_\omega$ with parameters $\omega$. In seq2seq training, an additional input $\bm x$ may be added to the conditional probabilities but is omitted here for simplicity.

The below theorem shows their equivalence up to an additional constant.
\begin{theorem}\label{thm:same_gradient}
Suppose the value function $q$ in Eqn.~\eqref{eq:irl_mle} and the seq2seq model $f$ in Eqn.~\eqref{eq:tf_mle} have the same  parametrization $\omega$, we have  \begin{align}
L_\text{\normalfont TF}(\omega;D)  = -\log P_\text{\normalfont IRL}(\calD|\omega) + \operatorname{const}.
\end{align}

\end{theorem}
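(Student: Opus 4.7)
The plan is to expand the trajectory likelihood $P^{\pi_{q_\omega}}(\tau)$ under the MDP and show that summing its log over $\calD$ collapses exactly onto the per-token cross-entropy defining $L_\text{TF}$, modulo terms independent of $\omega$.

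First I would write out, for each $\tau = ((s_1,a_1),\ldots,(s_{|\tau|},a_{|\tau|})) \in \calD$,
\begin{align*}
P^{\pi_{q_\omega}}(\tau) \;=\; P(s_1) \prod_{t=1}^{|\tau|} \pi_{q_\omega}(a_t \mid s_t)\, T(s_{t+1}\mid s_t, a_t),
\end{align*}
and then invoke the two structural properties of the text-generation MDP highlighted just above the theorem: (i) $T$ is deterministic, so each factor $T(s_{t+1}\mid s_t,a_t)$ equals $1$ whenever the trajectory is self-consistent (which it is, since every $\tau\in\calD$ is a realized sentence); and (ii) the initial state $s_1$ is the fixed empty prefix, so $P(s_1)$ is determined by the data and does not depend on $\omega$. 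Consequently, $\log P^{\pi_{q_\omega}}(\tau) = \sum_{t=1}^{|\tau|} \log \pi_{q_\omega}(a_t\mid s_t) + C_\tau$ where $C_\tau$ is $\omega$-free.

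Next I would apply Assumption~\ref{asp:exp_policy}: because $q_\omega$ and $f_\omega$ share parameters and both induce a softmax over $\calA = \calV$, the induced policy $\pi_{q_\omega}(a\mid s) = \exp(q_\omega(s,a))/\sum_{a'}\exp(q_\omega(s,a'))$ coincides pointwise with the teacher-forcing conditional $p_\omega(a\mid s)$. Identifying each trajectory $\tau$ with its target sentence $\bm y \in \calD$ (so that $s_t = \bm y_{<t}$ and $a_t = y_t$, with the terminal EOS action absorbed into $|\bm y|$), I would sum $-\log P^{\pi_{q_\omega}}(\tau)$ over $\tau\in\calD$ and collect the $\omega$-free pieces into $\operatorname{const} := -\sum_{\tau\in\calD} C_\tau$, which immediately yields the claim.

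The main obstacle is really just bookkeeping rather than any inequality or asymptotic control: one must make sure the trajectory index $t=1,\ldots,|\tau|$ is aligned with the token index $t=1,\ldots,|\bm y|$ (consistently handling start and EOS tokens), and isolate every factor that is $\omega$-independent into the additive constant. Once determinism of $T$ is exploited, the softmax identity from Assumption~\ref{asp:exp_policy} and the MLE decomposition of $L_\text{TF}$ match term by term, so the equivalence is exact rather than approximate.
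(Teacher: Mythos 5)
Your proposal is correct and follows essentially the same route as the paper's proof: factorize the trajectory likelihood into initial-state, transition, and policy terms, drop the transition factors by determinism, absorb the $\omega$-independent initial-state terms into the constant, and identify $\pi_{q_\omega}$ with $p_\omega$ via the shared softmax parametrization. The only cosmetic difference is that the paper formally justifies the factorization by citing the Ionescu--Tulcea theorem, whereas you write it down directly.
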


\begin{proof}

For the MLE of IRL under Assumption~\ref{asp:exp_policy}, the Ionescu--Tulcea theorem~\cite{Kallenberg2021} asserts that there exists a unique trajectory distribution $P^\pi_\mu$  satisfying \begin{align*}
    &P^\pi_\mu(s_1) = \mu(s_1), \\
    &P^\pi_\mu(s_1, a_1, \dots, s_t, a_t) = P^\pi_\mu(s_1, a_1, \dots, s_{t}) \pi(a_t|s_t), \\
    &P^\pi_\mu(s_1, a_1, \dots, s_t, a_t, s_{t+1}) = P^\pi_\mu(s_1, a_1, \dots, s_{t}, a_{t}) T(s_{t+1}|s_{t}, a_{t})
\end{align*} 
for any $t\ge 1$, given the initial state distribution $\mu$, transition probability $T$, and policy $\pi$.

The likelihood can thus be factorized by the multiplication of $\mu$, $T$, and $\pi$: \begin{align*}
    P_\text{IRL}(\calD|\omega) = \prod_{\tau\in\calD} P^{\pi_{q_\omega}}_\mu(\tau) = \prod_{\tau\in\calD} \bigg[ \mu(s_1) \pi_{q_\omega}(a_1| s_1) \prod_{t=2}^{|\tau|}  T(s_t|s_{t-1}, a_{t-1}) \pi_{q_\omega}(a_t|s_t) \bigg].
\end{align*}
As mentioned, text generation has a deterministic transition, i.e., $T(s'|s, a)=1$ for the next state $s'=s+[a]$.
Taking the $\mu$ terms out, we have \begin{align}\label{eq:irl_mle_final}
    -\log P_\text{IRL}(\calD|\omega)
    = -\log\prod_{\tau\in\calD} \prod_{t=1}^{|\tau|} \pi_{q_\omega}(a_t|s_t) - \log \prod_{\tau\in\calD} \mu(s_1),
\end{align}
where the second term is a constant in terms of $\omega$.
In Section~\ref{sec:rl}, text generation is modeled as an MDP with $s_t = \bm{y}_{<t}$ and $a_t=y_t$. Therefore, the first term of Eqn.~\eqref{eq:irl_mle_final} is the same as Eqn.~\eqref{eq:tf_mle} under the parametrization  $\pi_{q_\omega} = p_\omega$, concluding the equivalence between MLE for IRL and the teacher-forcing training of a seq2seq model.
\end{proof}

\paragraph{Inducing the Reward Function.}
Theorem~\ref{thm:same_gradient} shows that seq2seq training  with teacher forcing actually learns an IRL model. Thus, we may derive a reward function assuming the action value function is well trained:
\begin{align} \label{eq:inverse_bellman}
    r(s, a) = q_\omega(s, a) - \sum_{s'\in
    \calS} T(s'|s, a) \max_{a'\in\calA} q_\omega(s', a') = f_\omega(s, a) -  \max_{a'\in\calA} f_\omega(s+[a], a'),
\end{align}
where the first equality is due to the Bellman optimality condition~\eqref{eq:optimal_bellman}; the second equality is due to the parametrization of $q_\omega=f_\omega$ and the deterministic transition $T(s'|s,a)=1$ for $s'$ being the concatenation of the prefix $s$ and token $a$.

\begin{remark}\label{remark:shift}
It is easy to notice that $f_\omega(s, \cdot)$ may be arbitrarily shifted by a constant $c_s$ without changing $\pi_\omega$. This also shifts the derived reward $r(s,a)$ by $c_s - c_{s+[a]}$. However, it does not affect the optimal policy. We will prove this in Theorem~\ref{thm:shift} after introducing policy gradient methods.
\end{remark}

Our use of the Bellman optimality condition is different from classic RL, where the reward is well-defined and the action value function is thus learned~\cite{sutton2018reinforcement}. Instead, we induce the underlying reward assuming the action value function is known (given by Assumption~\ref{asp:exp_policy}). The following diagram shows the whole process of our derivation.
{\large\begin{align*}
    \calD
    \mathop{\xrightarrow{\hspace*{5em}}}^{\text{Teacher Forcing}} 
    \pi 
    \mathop{\xrightarrow{\hspace*{5em}}}^{\text{Assumption~\ref{asp:exp_policy}}}
    q
    \mathop{\xrightarrow{\hspace*{5em}}}^{\text{Eqn.~\eqref{eq:inverse_bellman}}}
    r
\end{align*}}

In real-world applications, the learned action value function might be imperfect; in this case, we may bound the error of our induced reward with the following theorem.

\begin{restatable}{theorem}{error}\label{thm:error}
Let $r^*$ be an underlying true reward function and $q^*$ be the corresponding optimal value function. Given an approximate value function $q$, we denote by $r$ the reward function derived from Eqn.~\eqref{eq:inverse_bellman}. Then, we must have $\| r - r^* \|_\infty$ bounded by $O(\| q - q^* \|_\infty)$. Here, $\| \cdot \|_\infty$ takes the maximum absolute value over all $s\in \calS$ and $a\in \calA$.
\end{restatable}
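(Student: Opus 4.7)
The strategy is to write both $r$ and $r^*$ in the same Bellman-inverse form and bound their difference directly, without invoking any iterative argument. The only nontrivial ingredient is the standard fact that the $\max$ operator is $1$-Lipschitz in the sup norm, i.e., $|\max_{a'} g(a') - \max_{a'} h(a')| \le \max_{a'} |g(a') - h(a')|$. Everything else is a triangle inequality.

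\textbf{Step 1: Express $r^*$ analogously to $r$.} Since $q^*$ satisfies the Bellman optimality equation~\eqref{eq:optimal_bellman} with reward $r^*$, and since for text generation the transition $T$ is deterministic with $s' = s + [a]$, I rearrange~\eqref{eq:optimal_bellman} to get
\begin{align*}
r^*(s,a) = q^*(s,a) - \max_{a' \in \calA} q^*(s+[a],\, a').
\end{align*}
By the definition of $r$ in Eqn.~\eqref{eq:inverse_bellman} and the identification $q_\omega = f_\omega = q$,
\begin{align*}
r(s,a) = q(s,a) - \max_{a' \in \calA} q(s+[a],\, a').
\end{align*}

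\textbf{Step 2: Subtract and bound.} Let $\epsilon := \|q - q^*\|_\infty$. Subtracting the two displays above,
\begin{align*}
r(s,a) - r^*(s,a) = \bigl(q(s,a) - q^*(s,a)\bigr) - \Bigl(\max_{a'} q(s+[a], a') - \max_{a'} q^*(s+[a], a')\Bigr).
\end{align*}
The first parenthesized term has absolute value at most $\epsilon$ by definition of the sup norm. For the second, I apply the $1$-Lipschitzness of $\max$ to conclude
\begin{align*}
\Bigl|\max_{a'} q(s+[a], a') - \max_{a'} q^*(s+[a], a')\Bigr| \le \max_{a'} \bigl|q(s+[a], a') - q^*(s+[a], a')\bigr| \le \epsilon.
\end{align*}
By the triangle inequality, $|r(s,a) - r^*(s,a)| \le 2\epsilon$ for every $(s,a)$, so $\|r - r^*\|_\infty \le 2\,\|q - q^*\|_\infty$, which is the claimed $O(\|q - q^*\|_\infty)$ bound.

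\textbf{Obstacles.} There is essentially no deep obstacle: the deterministic transition collapses the expectation in the Bellman equation, and the only analytic lemma needed is Lipschitzness of $\max$. The one subtlety worth stating carefully is that the theorem implicitly assumes $q^*$ itself satisfies~\eqref{eq:optimal_bellman} with $r^*$ (i.e., $q^*$ is the optimal value function for $r^*$), which is given by hypothesis; no further assumption on the optimality or training quality of $q$ is needed for the bound to hold. The constant $2$ could be improved in some regimes but is immaterial for an $O(\cdot)$ statement.
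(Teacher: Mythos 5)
Your proposal is correct and follows essentially the same route as the paper's own proof: both rewrite $r$ and $r^*$ via the deterministic-transition form of the Bellman equation, subtract, apply the triangle inequality, and arrive at the bound $\|r - r^*\|_\infty \le 2\|q - q^*\|_\infty$. The only cosmetic difference is that you invoke the $1$-Lipschitzness of the $\max$ operator as a known lemma, whereas the paper proves that fact inline via a two-case argument.
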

\begin{proof}
See Appendix~\ref{sec:prf:error}.
\end{proof}

\subsection{Periodically Synchronized Behavior Policy in Policy Gradient}\label{sec:period}
In text generation, a neural network can be viewed as a policy $\pi$ that predicts the word distribution given the state of a decoding step. The reward induced from Section~\ref{sec:irl} can be used to improve the policy through RL. To stabilize training, we propose a variant of off-policy policy gradient methods~\cite{degris2012off} with a periodically synchronized behavior policy.

Our RL training adopts the off-policy REINFORCE~\cite{williams1992simple} as the backbone of our algorithm.
Let $\pi_\varphi$ be the model policy (i.e., the model's prediction) to be optimized, and $\pi_b$ be the behavior policy (i.e., the sampling distribution during training). Through importance sampling, the gradient of the expected total reward with respect to $\varphi$ can be obtained by the off-policy policy gradient theorem~\cite{degris2012off}
\begin{align}\label{eq:policy_gradient}
\nabla_{\varphi} \E_{\pi_\varphi}\bigg[\sum_{t} r(s_t, a_t)\bigg] = \E_{\pi_{b}} \bigg[\sum_{t}  \rho_t \hat{q}_r(s_t, a_t)  \nabla_\varphi \log \pi_\varphi(a_t | s_t)\bigg],
\end{align}
where  $\rho_t := \pi_\varphi(a_t|s_t) / \pi_{b}(a_t|s_t)$ is the importance weight, and $\hat{q}_r(s_t, a_t) := \sum_{i\ge t} r(s_i, a_i)$ is the total reward of the trajectory.
In practice, off-policy REINFORCE ($\pi_\varphi\ne\pi_b$) is more exploratory than the on-policy one ($\pi_\varphi=\pi_b$), since the model policy $\pi_\varphi$ would become more concentrated during optimization and does not explore much, whereas $\pi_b$ is typically chosen to cover more trajectories. 
However, \citet{degris2012off} adopt a fixed behavior policy $\pi_b$, which does not perform exploitation according to the current model policy.
The lack of exploitation might lead to less informative training.

To balance exploration and exploitation, we would like the behavior policy to be close to the model policy but stay exploratory at the same time. We thus propose a periodically updating schedule, where the behavior policy is frozen for a long period to encourage exploration but keeps track of the current model policy to enhance exploitation. Particularly, we synchronize the behavior policy with the model policy for every $k$ gradient updates of the latter (e.g., $k=5000$).
Our remedy is a simple method overcoming the instability of REINFORCE. It shares a common ground with a number of policy gradient methods like the proximal policy optimization (PPO)~\cite{schulman2017proximal}, especially in that both methods involve multiple updates with a fixed behavior policy. As the main contribution of this paper is reward induction, we resort to this simple fix and leave the mathematical connection as an interesting future direction.

Algorithm~\ref{alg:rl_training} summarizes our approach. Our implementation is able to execute the loops in parallel, which speeds up the training process. Our periodically synchronized behavior policy further enables us to parallelize sampling and model updates to reduce the awaiting time.

\subsection{Application to Semi-Supervised Learning}

Our approach naturally aligns with the paradigm of semi-supervised learning, as it involves training a seq2seq model to induce the reward function, which requires (at least a small volume of) parallel data $\calD_p$. Additionally, we assume there is a non-parallel dataset $\calD_u$ containing input sentences only for RL training with the induced reward.

Our semi-supervised approach consists of two stages. We first train a seq2seq model $f_\omega$ on the parallel dataset $\calD_p$ to induce the reward function $r$ by Eqn.~\eqref{eq:inverse_bellman}. The procedure is described in Section~\ref{sec:irl}. The reward function then facilitates RL training on the non-parallel dataset $\calD_u$, which is shown in Algorithm~\ref{alg:rl_training}.

\begin{algorithm}
        \caption{Our Algorithm}\label{alg:rl_training}
         \KwIn{A non-parallel dataset $\calD_u$, learned logit (value) function $f_\omega$, policy $\pi_\varphi$ with the initial parameter $\varphi$, total update steps $U$, and synchronizing period $k$}
        \KwOut{A policy $\pi_\varphi$ parameterized by $\varphi$}
        \Begin{

            \For{$i \gets 1...U$}{
                \If{$i \equiv 0 \pmod k$}{
                     $\pi_b \gets \pi_\varphi$ \Comment*[r]{Behavior policy update}
                }

                Sample a source sentence $\bm{x} \in \calD_u$                
                
                Construct the initial state $s \gets (\bm{x}, \text{[BOS]})$ \Comment*[r]{[BOS] is the beginning token} 
                Sample a trajectory $\tau$ from the behavior policy $\pi_{b}$
                
                $\hat{q}_{h+1}^r \gets 0$ and $g \gets \bm{0}$
                
                \For{$t \gets |\tau|...1$}{
                     
                     \If{$t = |\tau|$}{$r_t \gets f_\omega(s_t, a_t)$ \Comment*[r]{Termination step, no $s_{t+1}$} }
                     \Else{$r_t \gets f_\omega(s_t, a_t) - \max_{a'} f_\omega(s_{t+1}, a')$ \Comment*[r]{By Eqn.~\eqref{eq:inverse_bellman}}}

                     $\hat{q}_t^r \gets r_t + \hat{q}_{t+1}^r$ \Comment*[r]{Accumulating rewards}
                     
                     $\rho_t \gets \pi_{\varphi}(a_t|s_t) / \pi_{b}(a_t|s_t)$ \Comment*[r]{Importance weight}
                     
                    $g \gets g + \rho_t \hat{q}_t^r \nabla_\varphi \log \pi_{\varphi}(a_t|s_t)$ \Comment*[r]{By Eqn.~\eqref{eq:policy_gradient}}
                }
                
                $\varphi \gets \varphi + \eta g$ \Comment*[r]{Gradient ascent}
            }
            
            \Return{$\pi_\varphi$}
        }
  \end{algorithm}

As mentioned in Remark~\ref{remark:shift}, the reward $r(s,a)$ can be arbitrarily shifted by $c_s - c_{s+[a]}$. We show that this shift does not affect the optimal policy.

\begin{theorem}\label{thm:shift}
Suppose $r'(s, a) = r(s,a) + c_s - c_{s+[a]}$. Then the learned policies under $r'(s,a)$ and $r(s,a)$ are the same. 
\end{theorem}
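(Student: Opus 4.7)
The plan is to exploit the telescoping structure of the shift $c_s - c_{s+[a]}$ along the deterministic transitions of text generation. I would begin by fixing any trajectory $\tau = ((s_1, a_1), \dots, (s_H, a_H))$; since $s_{t+1} = s_t + [a_t]$, the return-to-go from step $t$ under $r'$ satisfies
\begin{align*}
\hat{q}_{r'}(s_t, a_t) = \sum_{i=t}^H r'(s_i, a_i) = \sum_{i=t}^H r(s_i, a_i) + \sum_{i=t}^H (c_{s_i} - c_{s_{i+1}}) = \hat{q}_{r}(s_t, a_t) + c_{s_t} - c_{s_{H+1}},
\end{align*}
a pure telescoping cancellation. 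Adopting the convention that the absorbing post-[EOS] state carries $c = 0$ (consistent with the terminal handling in Algorithm~\ref{alg:rl_training}, where $r_t \gets f_\omega(s_t, a_t)$ with no successor term), the discrepancy reduces to the state-dependent term $c_{s_t}$.

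I would then plug this back into the off-policy policy gradient of Eqn.~\eqref{eq:policy_gradient}. The extra contribution under $r'$ relative to $r$ is $\E_{\pi_b}\!\left[\sum_t \rho_t\, c_{s_t}\, \nabla_\varphi \log \pi_\varphi(a_t | s_t)\right]$. Conditioning on $s_t$ and absorbing the importance weight via $\rho_t \pi_b(a|s_t) = \pi_\varphi(a|s_t)$, this term equals $\E_{s_t}\!\left[c_{s_t}\, \E_{a \sim \pi_\varphi}[\nabla_\varphi \log \pi_\varphi(a | s_t)]\right]$, which vanishes by the standard score-function identity $\E_{a \sim \pi_\varphi}[\nabla_\varphi \log \pi_\varphi(a|s)] = 0$. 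Hence the expected gradient, and therefore the learning dynamics, are identical under $r$ and $r'$, so the learned policy $\pi_\varphi$ does not depend on the shift. As an alternative route, one could lift the telescoping directly to the action value function: setting $t=1$ gives $q'^\pi(s,a) - q^\pi(s,a) = c_s$, which depends on $s$ only, so by the shift invariance of the softmax in Assumption~\ref{asp:exp_policy}, the induced policies coincide, and the Bellman optimality condition~\eqref{eq:optimal_bellman} yields the same optimal policy.

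The main obstacle is a careful treatment of the terminal boundary, because Eqn.~\eqref{eq:inverse_bellman} is written with a successor-state max that is absent at the [EOS] step; handling this requires extending $c$ to the absorbing state via $c_{s_{H+1}} = 0$ so that the telescoping collapses cleanly and the shift assumed in the theorem statement is consistent with the way rewards are actually computed by the algorithm. A secondary technical point is that Algorithm~\ref{alg:rl_training} is off-policy with importance weights, but the score-function identity survives because $c_{s_t}$ is a function of $s_t$ alone and thus commutes with the expectation over $a_t$; aside from these conventions, the entire argument is essentially a one-line telescoping observation combined with the baseline identity.
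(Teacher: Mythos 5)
Your proof takes essentially the same route as the paper's: telescope the shift along the deterministic transitions so that $\hat{q}_t^{r'} = \hat{q}_t^{r} + c_{s_t}$, then observe that the residual $c_{s_t}$ is a state-dependent baseline that cannot affect the policy gradient. The only difference is that you spell out the baseline step explicitly via the score-function identity $\E_{a\sim\pi_\varphi}[\nabla_\varphi\log\pi_\varphi(a|s)]=0$ (correctly handling the importance weight) and flag the terminal convention $c_{s_{H+1}}=0$, both of which the paper leaves implicit or delegates to a citation.
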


\begin{proof}
By Eqn.~\eqref{eq:def_q}, function $q$ returns the expected total reward. In Algorithm~\ref{alg:rl_training}, we sample it by \begin{align*}
    \hat{q}_t^{r'}(s_t, a_t) :=& r'(s_t, a_t) + r'(s_{t+1}, a_{t+1}) + \cdots + r'(s_{|\tau|}, a_{|\tau|})  \\
    =& r(s_t, a_t) + c_{s_t} - \cancel{c_{s_{t+1}}} + r(s_{t+1}, a_{t+1}) + \cancel{c_{s_{t+1}}} - c_{s_{t+2}} + \cdots+ r(s_{{|\tau|}}, a_{{|\tau|}}) + \cancel{c_{s_{|\tau|}}}\\
    =& c_{s_t} + r(s_t, a_t) + r(s_{t+1}, a_{t+1}) + \cdots + r(s_{{|\tau|}}, a_{{|\tau|}}) =:  \hat{q}^r(s_t, a_t) + c_{s_t}.
\end{align*}

The last line suggests the constant plays a role as the baseline in policy gradient, which is shown to be irrelevant to the optimal policy~\cite{sutton2018reinforcement}.
\end{proof}

\section{Experiments}

\begin{table}[t]
\caption{Main results. ${}^{\uparrow/\downarrow}$The higher/lower, the better. $^\dagger$Quoted from \citet{wen2022empirical} on deduplicated dialogue datasets. $^\ddagger$Quoted from~\cite{li2020unsupervised}. $^\S$Quoted from~\cite{ding2021learning}. For the paraphrase generation metric, we have iBLEU = $(1-\alpha)$ BLEU $ - \alpha$ SBLEU.}
\label{tab:main}
\begin{subtable}{.43\textwidth}
  \setlength{\tabcolsep}{1pt}
  \caption{Dialogue generation. }
  \vspace{5pt}
  \label{tab:main:a}
  \centering
  \resizebox{\linewidth}{!}{
  \begin{tabular}{lcc}
    \toprule
    Method     & \small{BLEU2$^\uparrow$}     & \small{BLEU4$^\uparrow$} \\
    \midrule
    \multicolumn{3}{c}{Parallel DailyDialog}  \\
    \midrule
    AdaLabel$^\dagger$~\cite{wang2021diversifying}        &  6.72 &  2.29 \\
    DialogBERT$^\dagger$~\cite{gu2020dialogbert}    & 5.42 &  2.16   \\
    T5-Base~\cite{raffel2019exploring}       & \textbf{8.96} & \textbf{3.69} \\
    \midrule
    \multicolumn{3}{c}{+ Parallel OpenSubtitles} \\
    \midrule
    $\text{[T5-Base]}$  Fully Supervised & 8.75 & 3.06 \\
    \midrule
    \multicolumn{3}{c}{+ Non-Parallel OpenSubtitles}  \\
    \midrule
      $\text{[T5-Base]}$ Self-Training & 9.10 & 3.73 \\
      $\text{[T5-Base]}$ R-Regression &  10.34 &  4.18 \\
      $\text{[T5-Base]}$ Ours & \textbf{11.02} & \textbf{4.30} \\
    \bottomrule
  \end{tabular}}
\end{subtable} \hfill
\begin{subtable}{.54\textwidth}
\setlength{\tabcolsep}{1pt}
  \caption{Paraphrase generation. ``Copy'' refers to direclty copying the input sentence.}
  \label{tab:main:b}
  \centering
  \resizebox{\linewidth}{!}{
  \begin{tabular}{l c c c}
    \toprule
    Method     & \small{BLEU4$^\uparrow$}     &  \small{SBLEU4$^\downarrow$} & \small{iBLEU4$^\uparrow$} \\
    \midrule
    Copy & 29.88 & 100.0  & 16.89\\
    \midrule
    \multicolumn{4}{c}{Parallel Quora Generation} \\
    \midrule
    Dagger$^\ddagger$~\cite{du2019empirical}      & 28.42 & 66.98  & 18.88\\
    RL-NN$^\ddagger$~\cite{qian2019exploring}    & 20.98  &  \textbf{40.52} &  14.83 \\
    T5-Base~\cite{raffel2019exploring}       & \textbf{30.83}  & 44.77 & \textbf{23.27} \\
    \midrule
    \multicolumn{4}{c}{+ Non-Parallel Quora Generatoin}  \\
    \midrule
    LTSL$^\S$~\cite{ding2021learning} & 29.25 & 71.25 & 19.20  \\
    $\text{[T5-Base]}$ Self-Training & 31.39 &48.02 & 23.44  \\
    $\text{[T5-Base]}$ R-Regression &  30.77  & \textbf{44.23} &  23.27 \\
    $\text{[T5-Base]}$ Ours & \textbf{31.47} & 45.43 & \textbf{23.78}  \\
    \bottomrule
  \end{tabular}}
\end{subtable}
\end{table}

\subsection{Datasets and Metrics}\label{sec:exp:data}
\paragraph{Dialogue Generation.} We adopt two widely used datasets, DailyDialog~\cite{li2017dailydialog} and OpenSubtitles~\cite{tiedemann2009news}, for the dialogue experiment. The DailyDialog dataset is constructed from English dialogues crawled from the Internet, whereas the OpenSubtitles dataset is constructed from movie subtitles based on IMDB identifiers. A dialogue session is split into single-turn context--response pairs in our experiment.
For semi-supervised learning, we use the smaller dataset, DailyDialog, as the parallel corpus $\mathcal D_p$, and the larger dataset, OpenSubtitles, as the non-parallel corpus $\mathcal D_u$ (i.e., we only retain the context sentence in the OpenSubtitles dataset). This follows the common setup for semi-supervised learning, where the unlabeled dataset is larger than the labeled one.

It should be emphasized that a recent study~\cite{wen2022empirical} shows more than 20\% of test samples are identical to some training samples in both DailyDialog and OpenSubtitles. This results in meaningless comparison and inflated performance of previous methods, e.g., a BLEU4 of 11.01 in AdaLabel~\cite{wang2021diversifying} and 14.61 in DialogBERT~\cite{gu2020dialogbert}.
Therefore, we use the deduplicated datasets in~\cite{wen2022empirical}, containing  60K/6.5K/7K samples for training/validation/test in DailyDialog and 1M non-parallel samples in OpenSubtitles. Although our scores will be lower than previous inflated ones, we follow the correct setting for research.

We use BLEU scores~\cite{papineni2002bleu} as main evaluation metrics, which are widely used in dialogue generation~\cite{wen2022empirical}. 
In particular, BLEU-$n$ evaluates the geometric average of $i$-gram precision scores for $i = 1,\cdots,n$. Following previous work~\cite{wen2022empirical}, we lowercase all sentences and tokenize them with the NLTK library~\cite{loper2002nltk}.

\paragraph{Paraphrase Generation.} We follow previous studies~\cite{ding2021learning,liu2019unsupervised,miao2019cgmh} and use the Quora Question Pair dataset\footnote{\url{https://www.kaggle.com/c/quora-question-pairs}} for the paraphrasing experiment. 
The Quora dataset is originally designed for paraphrase classification, containing both paraphrase and non-paraphrase pairs. 
The paraphrase pairs naturally form a parallel dataset for the generation purpose; following the common practice~\cite{miao2019cgmh}, we split it into 124K/4K/20K samples for training/validation/test. The non-paraphrase pairs, containing 510K sentences, are discarded in previous work, but we are able to utilize them in a semi-supervised manner.

We use the standard iBLEU score~\cite{sun2012joint} as the main evaluation metric. It involves a penalty of Self-BLEU (SBLEU) between the generated and input sentences, as the paraphrasing task requires using different lexicons. Specifically, it is calculated by iBLEU = $(1-\alpha)$ BLEU $ - \alpha$ SBLEU, where $\alpha$ is typically set to $0.1$~\cite{ding2021learning,liu2019unsupervised,miao2019cgmh}. For clarity, we also report BLEU and S-BLEU scores in our experiment.

\subsection{Settings and Competing Methods}
For each task, we first fine-tune a T5-Base model~\cite{raffel2019exploring} on the parallel data by Eqn.~\eqref{eq:tf_mle}. Then we apply our proposed method to induce the reward and further train the model by  Algorithm~\ref{alg:rl_training} on the non-parallel data.
We compare our approach with the following semi-supervised methods. 

\textbf{Self-Training.} We apply the supervised model to the non-parallel dataset and generate pseudo-target sentences, which are used to continue training the model. This is a commonly used semi-supervised approach in text generation literature~\cite{jiao2021self,zhang2016exploiting}.

\textbf{R-Regression.} 
\citet{wu2017sequence} propose a reward regression (R-Regression) approach, where the reward is defined as the BLEU score. Since their reward is the same as the evaluation metric, such a method may achieve higher BLEU scores without actually improving the generation quality. By contrast, our reward is induced in a principled way and is agnostic to evaluation metrics. In our experiment, we replicate the R-regression method, which constitutes a controlled comparison to our approach, as the only difference is the reward function.

Appendix~\ref{sec:apx:exp} provides implementation details and hyperparameters of our approach. 

\begin{figure}[t]
    \begin{center}
    \includegraphics[width=0.8\textwidth]{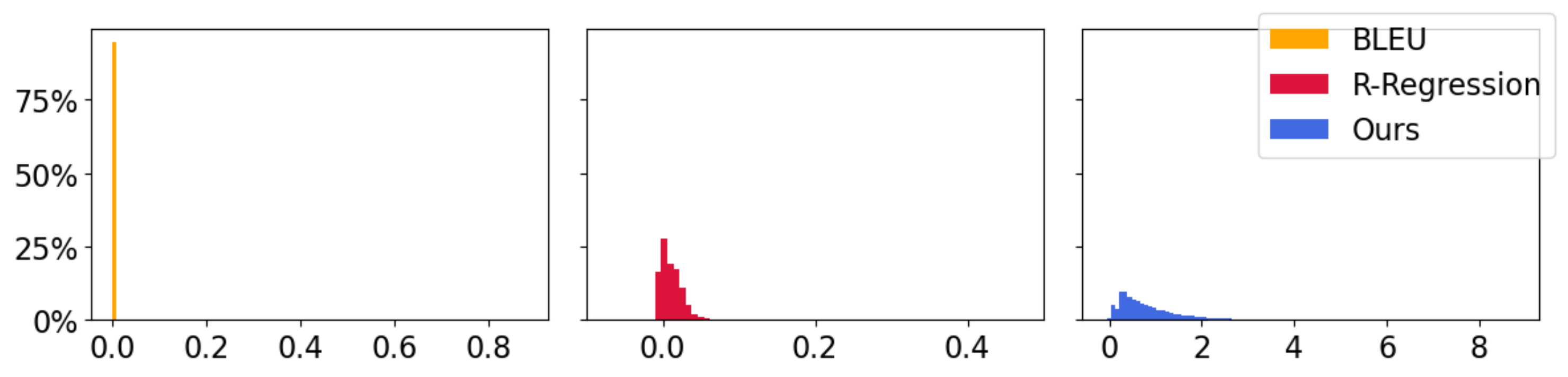}
    \end{center}
    \caption{The distributions of token-level estimation of future rewards ($\hat{q}_t^r$ in Algorithm~\ref{alg:rl_training}) on the DailyDialog validation set. The BLEU score of a sentence is shared among tokens in the same sentence.}\label{fig:dense}
\end{figure}

\subsection{Main Results}

\paragraph{Results of Dialogue Generation.}
Table~\ref{tab:main:a} shows the results of the dialogue generation task. We notice that our fine-tuned T5-Base model~\cite{raffel2019exploring} has already outperformed dedicated methods, AdaLabel~\cite{wang2021diversifying} and DialogBERT~\cite{gu2020dialogbert}. This is consistent with the findings of \cite{wen2022empirical,wen2022equalsize} in that the alleged ``state-of-the-art'' dialogue systems do not outperform standard pretrained language models on deduplicated datasets, highlighting the importance of working with the correct setting.

We then apply semi-supervised learning (Self-Training, R-Regression, and our approach) with the non-parallel OpenSubtitles dataset. We achieve higher performance than T5-Base trained only on parallel DailyDialog. Interestingly, the fully supervised model---trained on both parallel DailyDialog and parallel OpenSubtitles---does not achieve high performance, even lower than the one trained with DailyDialog only. It is noticed that the OpenSubtitles dataset is noisy~\cite{csaky2021gutenberg}, which likely causes the performance degradation. This signifies the need of semi-supervised learning.

Among semi-supervised approaches, RL-based methods (R-Regression and ours) are generally better than Self-Training. This is within our expectation because Self-Training learns from its own generation and may be overconfident, whereas RL approaches are able to explore different parts of the data space, being a more effective way of semi-supervised learning.

Moreover, our approach outperforms RL with R-Regression, where the reward is the only difference. The controlled experiment confirms that the reward induced from models trained with teacher forcing is effective for RL training. It is also worth noting that R-Regression uses the evaluation metric as the reward, and thus may deliberately improve the metric rather than text quality. By contrast, our reward is induced in a principled manner and is agnostic to evaluation metrics, and our approach still achieves higher performance even with such a disadvantage.

In general, our approach achieves the best performance in both metrics. In particular, it significantly improves DailyDialog-trained T5-Base by +2.06 (+23.0\%) in BLEU2 and +0.61 (+16.5\%) in BLEU4.  It also outperforms the second-best method, R-Regression, by 0.68 (+6.6\%) in BLEU2 and 0.12 (+2.9\%) in BLEU4, verifying the effectiveness of our approach.

\paragraph{Results of Paraphrase Generation.}

\begin{wrapfigure}{R}{0.35\textwidth}
    \vspace{-20pt}
    \begin{center}
    \includegraphics[width=0.35\textwidth]{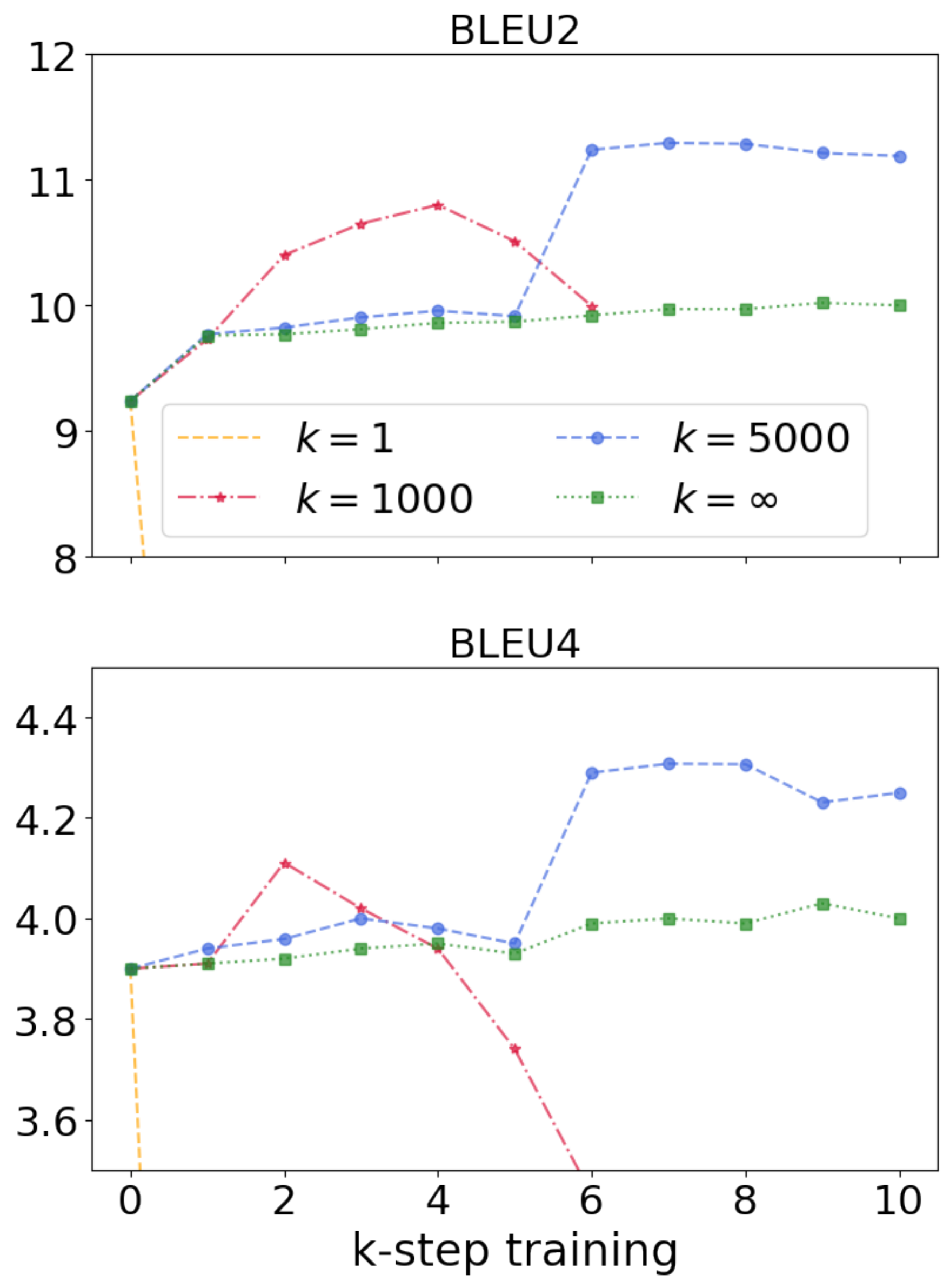}
    \end{center}
    \caption{The learning curves by choosing different values of $k$. Scores are measured on the validation set of DailyDialog. Training is terminated when the BLEU4 score drops below 3.5.}\label{fig:different_k}
    \vspace{-25pt}
\end{wrapfigure}

The results of paraphrase generation are shown in Table~\ref{tab:main:b}. As seen, directly copying the input already achieves a high BLEU score against the reference.  iBLEU addresses this by penalizing the Self-BLEU score (against input) and is considered the main metric.

We consider another semi-supervised baseline LTSL~\cite{ding2021learning}. It performs retrieval-based paraphrase expansion and meta optimization, thus being task specific. We see that LTSL has an extremely high Self-BLEU, suggesting the generated paraphrase largely resembles the input. It achieves a lower iBLEU score than other semi-supervised approaches.

We also see that RL approaches generally achieve lower Self-BLEU than Self-Training. This is because Self-Training learns from its own predictions, which overlap the input more than groundtruth paraphrases do (Self-BLEU of groundtruth: 29.87); as a result, Self-BLEU increases to 48.02 from 44.77 of T5-Base. By contrast, RL learns by exploring different possible paraphrases and is able to retain low Self-BLEU.

Overall, our approach achieves the highest BLEU and a reasonably low Self-BLEU, yielding the best iBLEU among all competing methods. The results are consistent with Table~\ref{tab:main:a}, showing the generality of our approach.

\begin{table}[t]
\caption{Comparing sparse and dense reward functions.}\label{tab:sparse}
\begin{subtable}{.43\textwidth}
\setlength{\tabcolsep}{0.1em}
  \caption{Dialogue generation.}
  \label{tab:sparse:dialogue}
  \centering
  \resizebox{0.9\linewidth}{!}{
  \begin{tabular}{llcc}
    \toprule
    Sparse & Method & \small{BLEU2$^\uparrow$} & \small{BLEU4$^\uparrow$} \\
    \midrule
    -& Self-Training~\cite{kim2016sequence} & 9.10 & 3.73 \\
    \midrule
    \multirow{2}{3em}{Yes} 
    & R-Regression~\cite{wu2017sequence}  & 9.45 &  3.73  \\
    
    & Induced-R & \textbf{9.75} & \textbf{3.99} \\
    \midrule
    \multirow{2}{3em}{No} 
    & R-Regression~\cite{wu2017sequence}   &  10.34 & 4.18 \\
    & Induced-R & \textbf{11.02} & \textbf{4.30} \\
    \bottomrule
  \end{tabular}}
\end{subtable}\hfill
\begin{subtable}{.53\textwidth}
\setlength{\tabcolsep}{0.1em}
  \caption{Paraphrase generation.}
  \label{tab:sparse:paraphrase}
  \centering
  \resizebox{0.9\linewidth}{!}{
  \begin{tabular}{llccc}
    \toprule
    Sparse & Method & \small{BLEU4$^\uparrow$} & \small{SBLEU$^\downarrow$} & \small{iBLEU4$^\uparrow$} \\
    \midrule
    -& Self-Training~\cite{kim2016sequence} & 31.39 & 48.11 & 23.44 \\
    \midrule
    \multirow{2}{3em}{Yes} 
    & R-Regression~\cite{wu2017sequence} & 30.78 & \textbf{44.32} & 23.27 \\
    & Induced-R & \textbf{31.28} & 45.22 &\textbf{23.63} \\
    \midrule
    \multirow{2}{3em}{No} 
    & R-Regression~\cite{wu2017sequence} & 30.77 & \textbf{44.23} & 23.27 \\
    & Induced-R & \textbf{31.47} & 45.43 &\textbf{23.78} \\
    \bottomrule
  \end{tabular}}
\end{subtable}
\end{table}

\subsection{Analyses}\label{sec:analysis}

\paragraph{Step-Wise Reward.} In Figure~\ref{fig:dense}, we show the distributions of different reward functions. As seen, the BLEU score is mostly concentrated at 0, providing little information for training. R-Regression consequently suffers from a similar problem, as it is trained by the groundtruth BLEU scores. The distribution of our induced reward, on the other hand,  has the lowest peak and is the most wide-spreading one.

We conduct another analysis to show the importance of step-wise rewards for RL training. We compare our approach with a sparse reward function that defers all rewards to the end of a sentence. In other words, the last step's reward is the sum of our step-wise rewards, whereas all previous steps have a reward of 0. This constitutes a rigorous analysis, as the total reward and thus the training objective are the same in both cases. 
Results in Table~\ref{tab:sparse} show that our step-wise reward outperforms the sparse reward in all cases. This suggests our approach serves as a meaningful credit assignment of the total reward, which is beneficial for RL training.

\paragraph{The Effect of the Synchronizing Period.}

We analyze the effect of the synchronizing period $k$ introduced in Section~\ref{sec:period}. In Figure~\ref{fig:different_k}, we see that the training is unstable if $k=1$ (on-policy), in which case the model generates uninformative and meaningless sentences (illustrated in Appendix~\ref{sec:apx:case}). When $k=1000$, the performance increases quickly at the beginning, but it starts to decrease with further training. We hypothesize that this is due to the lack of exploration (Section~\ref{sec:period}). When $k$ is infinitely large (the behavior policy is fixed), the performance grows slowly and stops improving after a certain number of steps. Based on this analysis, we choose $k=5000$  to balance exploitation and exploration. Although the experiment is conducted only on DailyDialog due to the limit of time and resources, we directly apply the setting to other experiments, showing the robustness of our approach.

\paragraph{Data Efficiency.}

In Figure~\ref{fig:unparallel_trend}, we analyze data efficiency by sampling different numbers of data points from the non-parallel corpus. As shown, our method consistently outperforms self-training, even with only 0.1\% (the leftmost points) of the training set. Additionally, the performance of our method quickly increases with more data, whereas self-training grows slowly. This is expected because RL training explores different parts of the sentence space and learns from their rewards, whereas self-training only learns from the single generated sentence by the model itself given an input.

We also investigate how performance changes according to the size of the parallel dataset, which reflects the quality of the learned policy. Results are shown in Figure~\ref{fig:parallel_size}, Appendix~\ref{sec:apx:additional_results}.

\section{Related Work}
\begin{wrapfigure}{R}{0.35\textwidth}
    \vspace{-15pt}
    \begin{center}
    \includegraphics[width=0.35\textwidth]{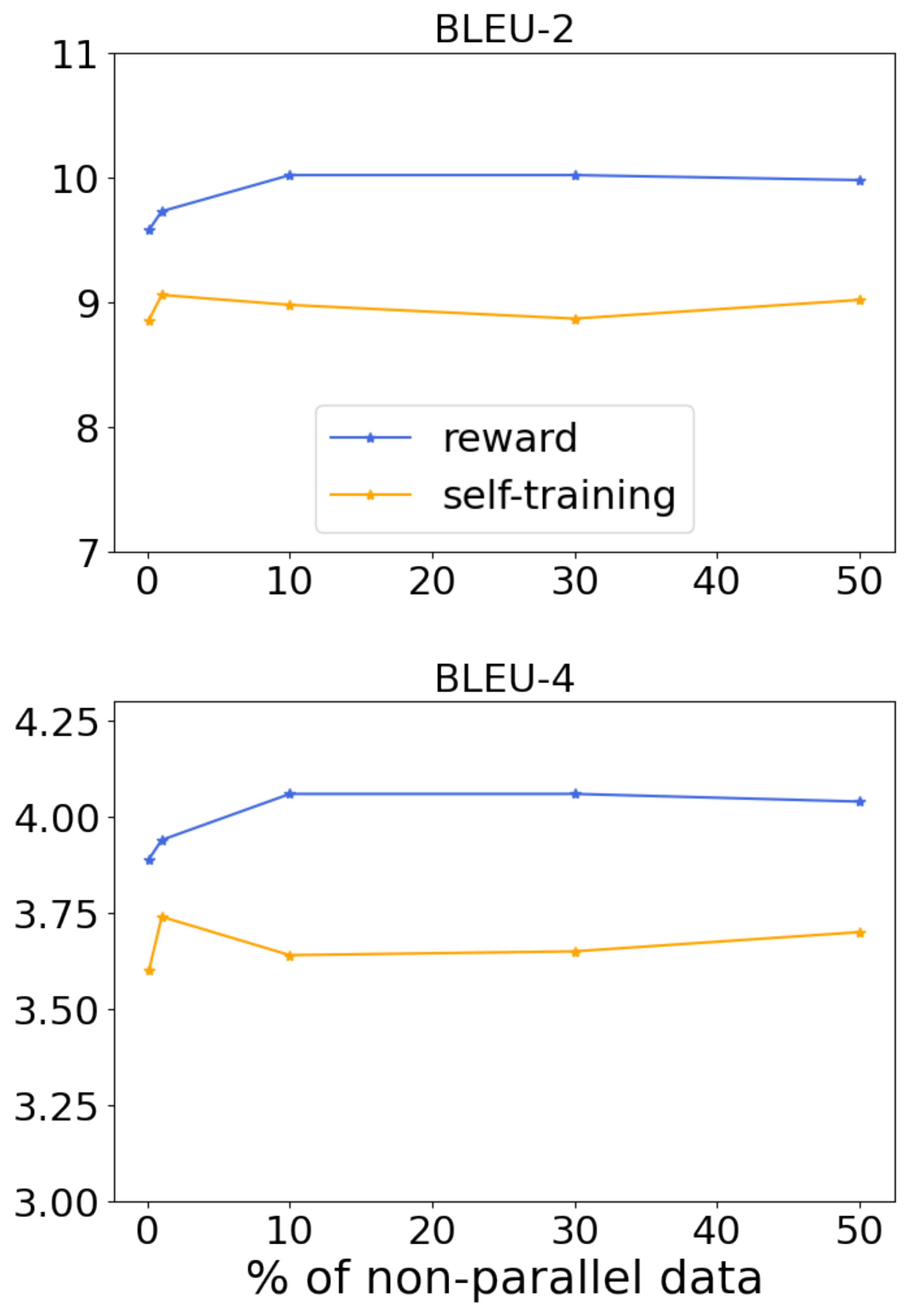}
    \end{center}
    \vspace{-10pt}
    \caption{Trends of self-training and our method given different sizes of the non-parallel data. Scores are measured on the DailyDialog test set.}\label{fig:unparallel_trend}
    \vspace{-25pt}
\end{wrapfigure}

\paragraph{Semi-Supervised Learning for Text Generation.}
In text generation, popular ways to utilize both parallel and non-parallel data include self-training~\cite{jiao2021self,zhang2016exploiting} and back-translation~\cite{sennrich2015improving}.  Both methods first train a model on the parallel data and then generate pseudo-parallel pairs for the non-parallel sentences. The difference is that self-training generates pseudo-parallel pairs from source to target, whereas back-translation generates from target to source. We mainly consider self-training as a baseline because it does not require an additional model in the reversed direction, making the comparisons fairer. Our implementation of self-training is also similar to sequence-level knowledge distillation~\cite{kim2016sequence,gu2017non,huang2021non}, except that the latter augments the parallel data instead of the non-parallel ones. In Figure~\ref{fig:unparallel_trend}, we show that self-training cannot efficiently utilize the data because of the lack of exploration. Additionally, the exposure bias issue remains because they are trained with the teacher-forcing objective.

\paragraph{Text Generation beyond Teacher Forcing.}
Teacher forcing is known to have the {exposure bias} issue. A line of work uses the generative adversarial network (GAN)~\cite{goodfellow2014generative} to alleviate the issue. For example,  \citet{yu2017seqgan} and \citet{guo2018long} propose to use GAN-style training to generate text similar to the training set in an on-the-fly manner. This practice reduces the discrepancy between training and inference because GAN sends its own generation as inputs rather than using groundtruth sentences during training. \citet{shi2018toward} further formulate the adversarial training using the IRL interpretation. These GAN-style methods are different from ours in two main ways.  First, GAN-style training requires parallel corpora and thus cannot be directly applied to semi-supervised learning on non-parallel datasets. Second, GAN-style training involves the optimization of an adversarial objective, making the training unstable, e.g., suffering from mode collapse~\cite{goodfellow2014generative}.

Another paradigm to alleviate the exposure bias is RL. For instance, \citet{sokolov2016stochastic} and \citet{kreutzer2017bandit} leverage the bandit-structured prediction framework for text generation with BLEU as the heuristically defined reward.   \citet{bahdanau2016actor} and \citet{shen2015minimum} utilize different variants of policy gradient for RL training. However, these methods are task-specific and suffer from the problem of sparse rewards, as mentioned in Section~\ref{sec:analysis}. More importantly, these approaches require parallel data to calculate the reward and cannot utilize non-parallel data either. To address this, \citet{wu2017sequence} propose to learn a reward regression model on the parallel dataset and perform RL on the non-parallel data with the learned reward. As mentioned, such a method is still task-specific because it requires the human heuristics of the task to define the proper reward function. Additionally, it suffers from the reward-sparsity problem, as seen in Figure~\ref{fig:dense} and Table~\ref{tab:sparse}.

Search is also a popular way to replace teacher forcing. The Learning to Search (L2S) framework~\cite{chang2015learning,daume2009search} enables the model to search for a better score during learning and is widely applied to text generation. For example, \citet{wiseman2016sequence} propose to optimize the beam search results through training. \citet{li2020unsupervised} develop an unsupervised learning approach to text generation based on local search. In addition to the L2S framework, \citet{leblond2021machine} leverage the Monte Carlo tree search~\cite{kocsis2006bandit,silver2016mastering} to select better tokens in a step from the sampled generation. These methods are different from ours since they need either heuristically defined scoring functions or parallel data, limiting their methods to certain tasks or to the supervised paradigm.
However, given the success of these methods, we consider the search-based approach an interesting future extension of our work.

\paragraph{Imitation Learning.}
The intuition behind our work is also related to imitation learning methods in general. Typically, these methods aim to obtain a good policy given a dataset containing state--action pairs. The easiest approach is behavior cloning~\cite{pomerleau1991efficient}, which greedily imitates the demonstration. Similar to the exposure bias, behavior cloning also faces the problem of compounding errors~\cite{ross2010efficient}. SMILe~\cite{ross2010efficient} and DAgger~\cite{ross2011reduction} mitigate the problem by querying an expert. In text generation, \citet{du2019empirical} empirically verify that imitation learning methods are helpful.  Recently, \citet{pang2021text} frame the text generation task as an offline reinforcement learning problem, which learns from a dataset containing tuples of state, action, and reward. Compared with our method, these approaches rely on parallel sentence pairs and cannot effectively make use of non-parallel datasets.

\section{Conclusion}
\paragraph{Summary.} In this paper, we show that a reward function can be derived from a model trained with teacher forcing. The derivation does not rely on human heuristics for certain tasks. Additionally, the derived reward function assigns step-wise scores and makes the RL training easier. Our approach leads to a training algorithm in a semi-supervised manner and utilizes both parallel and non-parallel data. We conduct experiments on the dialogue and paraphrase generation tasks. The empirical results show that the performance of our approach is better compared with the baselines: self-training and reward regression. We further analyze our reward function and show the benefits of our approach.

\paragraph{Limitation and Future Work.}

First, the scale of the experiments in this paper is restricted by computational resources. It is interesting to see if our approach could obtain better performance with large models~\cite{brown2020language,rae2021scaling} and larger datasets.

We also notice that Assumption~\ref{asp:exp_policy} has a deep connection with entropy-regularized RL~\cite{guo2021text,haarnoja2017reinforcement, reddy2019sqil}. Our approach can be easily extended to such cases in the future.

Another interesting direction would be using the reward as an interface between humans and the model to control the generation. Specifically, the current seq2seq models treat data as the ground truth, but the data may be contaminated with undesired or harmful information. We hope that our approach provides a way for humans to apply additional rules to the reward function to avoid the model generating harmful information.

\section*{Acknowledgments}
We thank all reviewers for their valuable comments. We also thank Guoqing Luo for early discussions. The research is supported in part by the Natural Sciences and Engineering Research Council of Canada (NSERC) under grant No.~RGPIN2020-04465, the Amii Fellow Program, the Canada CIFAR AI Chair Program, a UAHJIC project, a donation from DeepMind, and the Digital Research Alliance of Canada (alliancecan.ca).

\bibliographystyle{plainnat}
\bibliography{generated}

\vfill
\pagebreak

\appendix

\section{Proof of Theorem~\ref{thm:error}}\label{sec:prf:error}

\error*

\begin{proof}
For any $s\in\calS$ and $a\in\calA$, we have 
\begin{align}
& | r(s, a) - r^*(s, a) | \nonumber \\ =& | q(s, a) - q^*(s, a) + \max_{a'} q^*(s+[a], a') - \max_{a'} q(s+[a], a') |     \label{eq:err:1} \\
\le & | q(s, a) - q^*(s, a) | + | \max_{a'} q^*(s+[a], a') - \max_{a'} q(s+[a], a') |  \label{eq:err:2} \\
\le & \max_{s', a'} | q(s', a') - q^*(s', a') | +  \max_{s'} | \max_{a'} q^*(s', a') - \max_{a'} q(s', a') | \label{eq:err:3}\\
\le & \max_{s', a'} | q(s', a') - q^*(s', a') | \nonumber \\ & + \max_{s'} \max\{ \max_{a'} q^*(s', a') - \max_{a'} q(s', a')), \max_{a'} q(s', a') - \max_{a'} q^*(s', a)) \} \label{eq:err:4} \\
\le & \max_{s', a'} | q(s', a') - q^*(s', a') | \nonumber \\ & + \max_{s'} \max\{ \max_{a'} (q^*(s', a') - q(s', a')), \max_{a'} (q(s', a') - q^*(s', a')) \} \label{eq:err:5}\\
\le & 2 \max_{s', a'} | q(s', a') - q^*(s', a') | \label{eq:err:6} \\
= & 2 \| q - q^* \|_\infty. \label{eq:err:7} 
\end{align}

Here, Eqn.~\eqref{eq:err:1} is from the Bellman equation; Eqn.~\eqref{eq:err:2} follows the triangle inequality; and Eqn.~\eqref{eq:err:3} generalizes certain $s$  and $a$ to all possible $s'\in\calS, a'\in\calA$. Eqn.~\eqref{eq:err:4} discusses two possible cases: whether $\max_{a'} q(s', a') \ge \max_{a'} q^*(s',a')$ or not. Eqn.~\eqref{eq:err:5} is because $- \max_{a'} q(s', a') \le -q(s', a'')$ for any $a'' \in \calA$. Eqn.~\eqref{eq:err:6} merges all the maximum operation, and Eqn.~\eqref{eq:err:7} is the definition of the infinity norm.

Since the last equation does not depend on $s$ and $a$, we conclude $\| r - r^* \|_\infty$ is bounded by $O(\| q - q^* \|_\infty)$.
\end{proof}

\section{Experiments Details}\label{sec:apx:exp}
For all experiments, we initialize the model with T5-Base~\cite{raffel2019exploring} provided by HuggingFace~\cite{wolf2019huggingface}.  We use the label smoothing~\cite{szegedy2016rethinking} with a coefficient of $0.1$. We use the Adam~\cite{kingma2015adam} optimizer with $(\beta_1, \beta_2) = (0.9, 0.999)$. Each batch contains around 32K tokens.

For all conventional seq2seq training, the learning rate is scheduled according to the original Transformer~\cite{vaswani2017attention} with the warm-up steps set as $4000$.
For all  RL training, we drop the warm-up phase and set the maximum learning rate to $1e-5$. We set the synchronizing period $k$ to $5000$. The reward of our method is scaled down by 100 times. We apply the reward clipping trick~\cite{mnih2015human} to bound the reward within $[-1, 1]$ to stabilize the training.

For inference, we follow previous work and use greedy decoding in the dialogue generation task and use beam search with a beam size of $5$ in the paraphrase generation task.

All the experiments are done on either $4\times$NVIDIA A100 or $4\times$NVIDIA V100.

\iffalse
\section{Program Structure}\label{sec:apx:structure}

We show the process structure of our code in Figure~\ref{fig:structure}. Within each GPU, there are two processes (indicated by blue and red) with four components (indicated by blocks). The first process maintains the behavior policy and calculates the reward for each token with the reward model. The second process uses such information to update the model policy with the synchronized gradient across GPUs. The gray arrows represent the immediate communication of parameters or data. The blue arrows denote the delayed communication for the periodical synchronization as in Section~\ref{sec:period}.

\begin{figure}[tb]
    \centering
    \includegraphics[width=0.8\textwidth]{figs/structure}
    \caption{The illustration of the training structure on two GPUs.}\label{fig:structure}
\end{figure}

In this way, we sample hypotheses via the behavior policy by a separate process. Such a process continually generates hypotheses and reduces the waiting time for model policy updates. Meanwhile, the synchronization of the behavior policy parameters becomes efficient through the shared memory mechanism.
\fi

\section{Additional Results}\label{sec:apx:additional_results}

We analyze the effect of the sizes of parallel data in Figure~\ref{fig:parallel_size}. Our approach consistently outperforms competing methods in all settings. The results show that a high-quality $f_\omega$ indeed leads to better performance, but our model is still robust when $f_\omega$ is trained with limited data. Notably, our method drops by 6.8\% when having 10\% of the parallel data, whereas R-Regression drops by 10.6\%. This show that our reward induction approach utilizes the parallel data more effectively.

\begin{figure}[tbh]
    \centering
    \includegraphics[width=\linewidth]{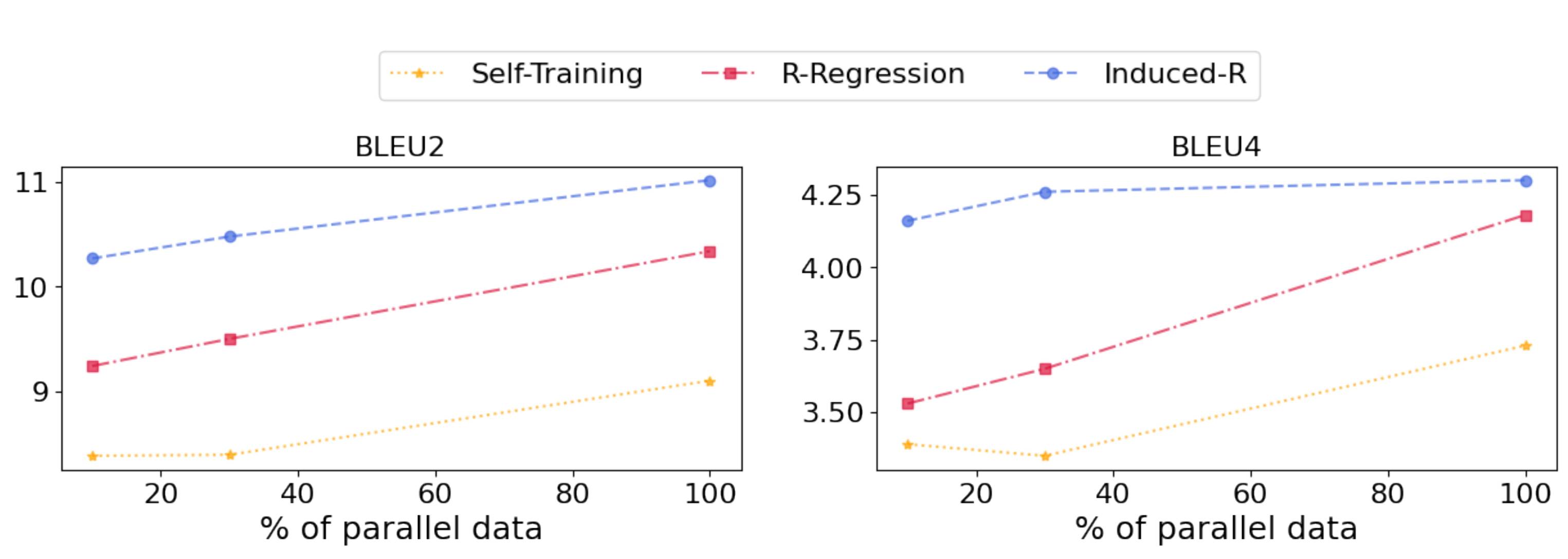}
    \caption{Results of different methods given different sizes of the parallel data. Scores are measured on the DailyDialog test set.}
    \label{fig:parallel_size}
\end{figure}

\iffalse
\begin{table}[tbh]
\caption{Results of different methods given different sizes of the parallel data. Scores are measured on the DailyDialog test set.}
\centering
\begin{subtable}{.49\textwidth}
\centering
\caption{BLEU2}
\begin{tabular}{lccc}
    \toprule
    Model & 10\% & 30\% & 100\% \\
    \midrule
    Self-Training &  8.38 & 8.39 & 9.10  \\
    R-Regression & 9.24 & 9.50 & 10.34 \\
    Induced-R & 10.27 & 10.48 & 11.02 \\
    \bottomrule
\end{tabular}
\end{subtable}
\hfill
\begin{subtable}{.49\textwidth}
\centering
\caption{BLEU4}
\begin{tabular}{lccc}
    \toprule
    Model & 10\% & 30\% & 100\% \\
    \midrule
    Self-Training & 3.39 &  3.35 &  3.73 \\
    R-Regression & 3.53 & 3.65 & 4.18 \\
    Induced-R &  4.16 & 4.26 & 4.30 \\
    \bottomrule
\end{tabular}
\end{subtable}
\label{tab:parallel_size}
\end{table}
\fi
\vfill

\section{Case Study}\label{sec:apx:case}
We demonstrate several cases from the generation of different models. These cases come from the DailyDialog validation set.

\paragraph{Examples of Generated Dialogue Responses.}
In the first case of Table~\ref{tab:case:meangingless}, we show a phenomenon that previous methods tend to generate short and meaningless responses. On the other hand, our method usually generates more informative sentences and makes the conversation more natural and human-like.
\begin{table}[htb]
    \caption{Examples of generated dialogue responses.}\label{tab:case:meangingless}
    \centering
    \begin{tabular}{|c|l| p{27em}|}
    \hline
     \multicolumn{2}{|c|}{Context} & We can make shipment within one month from receipt of order.  \\
    \hline
    \multirow{3}{3.5em}{Response} 
    & Self-Training & I see. \\
    & R-Regression     &  I see.  \\
    & Ours     &  I see. I'll have to discuss it with my manager. \\
    \hline 
    \hline
     \multicolumn{2}{|c|}{\multirow{2}{3.5em}{Context}} &  Where's your girlfriend? I thought you were going out with her today.  \\
    \hline
    \multirow{4}{3.5em}{Response} 
    & Self-Training & I got engaged. We broke up last week. \\
    & R-Regression     &  I got engaged. She told me she's just married. \\
    &  \multirow{2}{4em}{Ours}     &  She came back from Australia last week. She is a nice girl but there's nothing I can do about her. \\
    \hline 
    \end{tabular}
\end{table}

We also find that previous methods tend to generate sentences with inconsistent or even conflicting semantics. In the second case in Table~\ref{tab:case:meangingless}, for example, both Self-Training and R-Regression reply ``I got engaged'' but the next sentences are illogical.  This implies that previous methods may generate low-quality sentences even if they have seemingly decent BLEU scores. By contrast, our model generates a more proper response.

\paragraph{On-Policy Degeneration.}
In Section~\ref{sec:period}, we mention that if $k=1$ (on-policy), the generation will become deterministic and uninformative. We show such cases in Table~\ref{tab:case:degeneration}. The responses are generated by the first save (1000 updates) of the model in the experiment.

\begin{table}[htb]
    \caption{Failure cases of on-policy training ($k=1$).}\label{tab:case:degeneration}
    \centering
    \begin{tabular}{|c|p{33.2em}|}
    \hline
     Context & We can make shipment within one month from receipt of order. \\
     Response &  I see. I'll have to think about it. \\
     \hline
     Context & Where's your girlfriend? I thought you were going out with her today. \\
     Response &   I'm sorry, but I'm not sure I'll be able to make it. I'll have to think about it.  \\
    \hline
    \end{tabular}
\end{table}

For both cases, the model replies ``I'll have to think about it'' at the end of the sentences. In fact, most of the generated responses end with this phrase, which is redundant and meaningless. 
This phenomenon is likely to be a result of over-deterministic and insufficient exploration of the on-policy update. If the behavior policy becomes more deterministic of a certain phrase, it will have a smaller chance to explore other hypotheses. Hence, it will enhance the preferred responses and become even more deterministic. On the contrary, our periodically synchronized behavior policy keeps to be exploratory and does not have the degeneration problem as shown in Table~\ref{tab:main} and Figure~\ref{fig:different_k}.

\end{document}